\providecommand{\SetAlgoLined}{\SetLine}
\begin{document}

\newtheorem{theorem}{Theorem}
\newtheorem{lemma}{Lemma}

\renewcommand{\algorithmicrequire}{\textbf{Input:}}
\renewcommand{\algorithmicensure}{\textbf{Output:}}

\newcommand{\eg}{\emph{e.g.}\xspace}
\newcommand{\st}{\emph{s.t.}\xspace}
\newcommand{\ie}{\emph{i.e.}\xspace}
\newcommand{\etc}{\emph{etc.}\xspace}
\newcommand{\wrt}{\emph{w.r.t.}\xspace}
\newcommand{\etal}{\emph{et al.}\xspace}
%
\title{A Convex Formulation for Spectral Shrunk Clustering}
\author{Xiaojun Chang$^1$, Feiping Nie$^{2,3}$, Zhigang Ma$^4$, Yi Yang$^1$ and Xiaofang Zhou$^5$ \\
$^1$Centre for Quantum Computation and Intelligent Systems, University of Technology Sydney, Australia. \\
$^2$Center for OPTical IMagery Analysis and Learning, Northwestern Polytechnical University, Shaanxi, China. \\
$^3$Department of Computer Science and Engineering, University of Texas at Arlington, USA. \\
$^4$School of Computer Science, Carnegie Mellon University, USA.\\
$^5$School of Information Technology \& Electrical Engineering, The University of Queensland, Australia.\\
cxj273@gmail.com,feipingnie@gmail.com,kevinma@cs.cmu.edu,yiyang@cs.cmu.edu,zxf@itee.uq.edu.au.\\
}
\maketitle
\begin{abstract}
\begin{quote}
Spectral clustering is a fundamental technique in the field of data mining and information processing. Most existing spectral clustering algorithms integrate dimensionality reduction into the clustering process assisted by manifold learning in the original space. However, the manifold in reduced-dimensional subspace is likely to exhibit altered properties in contrast with the original space. Thus, applying manifold information obtained from the original space to the clustering process in a low-dimensional subspace is prone to inferior performance. Aiming to address this issue, we propose a novel convex algorithm that mines the manifold structure in the low-dimensional subspace. In addition, our unified learning process makes the manifold learning particularly tailored for the clustering. Compared with other related methods, the proposed algorithm results in more structured clustering result. To validate the efficacy of the proposed algorithm, we perform extensive experiments on several benchmark datasets in comparison with some state-of-the-art clustering approaches. The experimental results demonstrate that the proposed algorithm has quite promising clustering performance.
\end{quote}
\end{abstract}

\section{Introduction}

Clustering has been widely used in many real-world applications \cite{cluster,DBLP:conf/pkdd/WangNH14}. The objective of clustering is to cluster the original data points into various clusters, so that data points within the same cluster are dense while those in different clusters are far away from each other \cite{surveyspectral}. Researchers have proposed a variety of clustering algorithms, such as $K$-means clustering and mixture models \cite{DBLP:conf/miccai/WangWNYCSSH14,kddclustering,DBLP:journals/tnn/NieZTXZ11}, \emph{etc}.

The existing clustering algorithms, however, mostly work well when the samples' dimensionality is low. When partitioning high-dimensional data, the performance of these algorithms is not guaranteed. For example, $K$-means clustering iteratively assigns each data point to the cluster with the closest center based on specific distance/similarity measurement and updates the center of each cluster. But the distance/similarity measurements may be inaccurate on high-dimensional data, which tends to limit the clustering performance. As suggested by some researchers, many high-dimensional data may exhibit dense grouping in a low-dimensional subspace~\cite{SEC}. Hence, researchers have proposed to first project the original data into a low-dimensional subspace via some dimensionality reduction techniques and then cluster the computed low-dimensional embedding for high-dimensional data clustering. For instance, a popular approach is to use Principle component analysis (PCA) to reduce the dimensionality of the original data followed by Kmeans for clustering (PcaKm) \cite{surcs}. Ding et al. present a clustering algorithm based on Linear discriminant analysis (LDA) method \cite{DBLP:conf/icml/DingL07}. Ye et al propose discriminative $K$-means (DisKmeans) clustering which unifies the iterative procedure of dimensionality reduction and $K$-means clustering into a trace maximization problem \cite{dkmeans}.

Another genre of clustering, \emph{i.e.}, spectral clustering \cite{normalizedcut} integrates dimensionality reduction into its clustering process. The basic idea of spectral clustering is to find a clustering assignment of the data points by adopting the spectrum of similarity matrix that leverages the nonlinear manifold structure of original data. Spectral clustering has been shown to be easy to implement and oftentimes it outperforms traditional clustering methods because of its capacity of mining intrinsic geometric structures, which facilitates partitioning data with more complicated structures. The benefit of utilizing manifold information has been demonstrated in many applications, such as image segmentation and web mining. Due to the advantage of spectral clustering, different variants of spectral clustering algorithms have been proposed these years \cite{aaai15MVSCBGLi}. For example, local learning-based clustering (LLC) \cite{LLC} utilizes a kernel regression model for label prediction based on the assumption that the class label of a data point can be determined by its neighbors. Self-tuning SC \cite{selftuning} is able to tune parameters automatically in an unsupervised scenario. Normalized cuts is capable of balancing the volume of clusters for the usage of data density information \cite{normalizedcut}.

Spectral clustering is essentially a two-stage approach, \emph{i.e.}, manifold learning based in the original high-dimensional space and dimensionality reduction. To achieve proper clustering, spectral clustering assumes that two nearby data points in the high density region of the reduced-dimensional space have the same cluster label. However, this assumption does not always hold. More possibly, these nearest neighbors may be far away from each other in the original high-dimensional space due to the curse of dimensionality. That being said, the distance measurement of the original data could not precisely reflect the low-dimensional manifold structure, thus leading to suboptimal clustering performance.

Intuitively, if the manifold structure in the low-dimensional space is precisely captured, the clustering performance could be enhanced when applied to high-dimensional data clustering. Aiming to achieve this goal, we propose a novel clustering algorithm that is able to mine the inherent manifold structure of the low-dimensional space for clustering. Moreover, compared to traditional spectral clustering algorithms, the shrunk pattern learned by the proposed algorithm does not have an orthogonal constraint, giving it more flexibility to fit the manifold structure. It is worthwhile to highlight the following merits of our work:

\begin{itemize}
\item The proposed algorithm is more capable of uncovering the manifold structure. Particularly, the shrunk pattern does not have the orthogonal constraint, making it more flexible to fit the manifold structure. 
\item The integration of manifold learning and clustering makes the former particularly tailored for the latter. This is intrinsically different from most state-of-the-art clustering algorithms.
\item The proposed algorithm is convex and converges to global optimum, which indicates that the proposed algorithm does not rely on the initialization.
\end{itemize}

The rest of this paper is organized as follows. After reviewing related work on spectral clustering in section 2, we detail the proposed algorithm in section 3. Extensive experimental results are given in section 4 and section 5 concludes this paper.

\section{Related Work}
Our work is inspired by spectral clustering. Therefore, we review the related work on spectral clustering in this section.
\subsection{Basics of Spectral Clustering}
To facilitate the presentation, we first summarize the notations that will be frequently used in this paper. Given a dataset $\mathcal{X} = \{x_1, \dots, x_n \}$, $x_i \in \mathbb{R}^d (1 \leq i \leq n)$ is the $i$-th datum and $n$ is the total number of data points. The objective of clustering is to partition $\chi$ into $c$ clusters. Denote the cluster assignment matrix by $Y = \{y_1, \dots, y_n\} \in \mathbb{R}^{n \times c}$, where $y_i \in {\{0, 1\}}^{c \times 1}~(1 \leq i \leq n)$ is the cluster indicator vector for the datum $x_i$. The $j$-th element of $y_i$ is 1 if $x_i$ is clustered to the $j$-th cluster, and 0 otherwise.

Existing spectral clustering algorithms adopt a weighted graph to partition the data. Let us denote $\mathcal{G} = \{ \mathcal{X}, A \}$ as a weighted graph with a vertex set $\mathcal{X}$ and an affinity matrix $A \in \mathbb{R}^{n \times n}$. $A_{ij}$ is the affinity of a pair of vertexes of the weighted graph. $A_{ij}$ is commonly defined as:

\begin{eqnarray}\nonumber
\small
A_{ij} =
\left\{
\begin{array}{lll}
exp(-\frac{\|x_i - x_j\|^2}{\delta ^2}), \textit{if $x_i$ and $x_j$ are $k$ nearest neighbors.} \\
0, ~~~~~~~~~~~~~~~~~~~~~~~~~\textit{otherwise.}
\end{array}
\right.
\end{eqnarray}
where $\delta$ is the parameter to control the spread of neighbors. The Laplacian matrix $L$ is computed according to $L = D - A$, where $D$ is a diagonal matrix with the diagonal elements as $D_{ii} = \sum_j A_{ij}, \forall i$. Following the work in \cite{dkmeans}, we denote the scaled cluster indicator matrix $F$ as follows:

\begin{equation}
F = [F_1, F_2, \dots, F_n]^T = Y(Y^TY)^{-\frac{1}{2}},
\end{equation}
where $F_i$ is the scaled cluster indicator of $x_i$. The $j$-th column of $F$ is defined as follows by \cite{dkmeans}:

\begin{equation}
f_j = \left[\underbrace{0,\dots , 0,}_{\sum_{i=1}^{j-1}n_i} \underbrace{\frac{1}{\sqrt{n_j}}, \dots , \frac{1}{\sqrt{n_j}},}_{n_j} \underbrace{0, \dots , 0}_{\sum_{i=j+1}^c n_k} \right],
\end{equation}
which indicates which data points are partitioned into the $j$-th cluster $C_j$. $n_j$ is the number of data points in cluster $C_j$.

The objective function of spectral clustering algorithm is generally formulated as follows:
\begin{equation}
\begin{aligned}
\min_F~ & Tr(F^TLF) \\
s.t. ~& F = Y(Y^TY)^{-\frac{1}{2}}
\end{aligned}
\label{objsc}
\end{equation}
where $Tr(\cdot)$ denotes the trace operator. By denoting $I$ as an identity matrix, we can define the normalized Laplacian matrix $L_n$ as:

\begin{equation}
L_n = I - D^{-\frac{1}{2}}AD^{-\frac{1}{2}}.
\end{equation}

By replacing $L$ in Eq. \eqref{objsc} with the normalized Laplacian matrix, the objective function becomes the well-known SC algorithm normalized cut \cite{normalizedcut}. In the same manner, if we replace $L$ in Eq. \eqref{objsc} by the Laplacian matrix obtained by local learning \cite{ldmg}\cite{LLC}, the objective function converts to Local Learning Clustering (LLC).

\subsection{Progress on Spectral Clustering}
Being easy to implement and promising for many applications, spectral clustering has been widely studied for different problems. Chen et al. propose a Landmark-based Spectral Clustering (LSC) for large scale clustering problems~\cite{DBLP:conf/aaai/ChenC11}. Specifically, a few representative data points are first selected as the landmarks and the original data points are then represented as the linear combinations of these landmarks. The spectral clustering is performed on the landmark-based representation. Yang et al. propose to utilize a nonnegative constraint to relax the elements of cluster indicator matrix for spectral clustering~\cite{DBLP:conf/aaai/YangSNJZ11}. Liu et al. propose to compress the original graph used for spectral clustering into a sparse bipartite graph. The clustering is then performed on the bipartite graph instead, which improved the efficiency for large-scale data~\cite{DBLP:conf/ijcai/LiuWDH13}. Xia et al. propose a multi-view spectral clustering method based on low-rank and sparse decomposition~\cite{DBLP:conf/aaai/XiaPDY14}. Yang et al. propose to use Laplacian Regularized L1-Graph for clustering~\cite{DBLP:conf/aaai/YangWYWCH14}. Tian et al. recently propose to adopt deep learning in spectral clustering~\cite{DBLP:conf/aaai/TianGCCL14}.

In spite of the encouraging progress, few of the existing spectral clustering methods have considered learn the manifold in the low-dimensional subspace more precisely, not to mention integrating such manifold learning and clustering into a unified framework. This issue shall be addressed in this paper for boosted clustering performance.

\section{The Proposed Algorithm}

In this section, we present the details of the proposed algorithm. A fast iterative method is also proposed to solve the objective function.

\subsection{Problem Formulation}
Our algorithms is built atop the aim of uncovering the utmost manifold structure in the low-dimensional subspace of original data. Inspired by~\cite{DBLP:journals/ipm/HouNJZW13}, we adopt the pattern shrinking during the manifold learning and the shrunk patterns are exploited for clustering simultaneously.

To begin with, we have the following notations. Denote the shrunk patterns of $n$ data samples as $\{g_1, \cdots, g_n \}$, where $g_i \in \mathbb{R}^c$. We first obtain spectral embedding $F$ of the original samples by minimizing the traditional spectral clustering algorithm $\min Tr(F^TL_nF)$, where $L_n$ is a normalized Laplacian matrix.

Next, the shrunk patterns are computed by satisfying the following requirements. (1) The shrunk patterns should keep consistency with the spectral embedding. To be more specific, the shrunk patterns should not be far away from the spectral clustering. (2) Note that nearby points are more likely to belong to the same cluster. We thus design a similarity matrix to measure pair similarity of any two spectral embedding, which the shrunk patters should follow.

To characterize the manifold structure of the spectral embedding $\{f_1, \cdots , f_n \}$, a $k$-nearest neighbor graph is constructed by connecting each point to its $k$ nearest neighbors. The similarity matrix, $W$, is computed by $W_{ij} = exp(-\frac{\| f_i - f_j\|^2}{\delta ^2})$.

From this similarity matrix, we can observe that if two spectral embeddings are nearby, they should belong to the same cluster and the corresponding weight should be large, which satisfies the first requirement \cite{DBLP:conf/iccv/NieWHD11}.

To keep the local similarity of spectral embedding, we propose to optimize the following objective function.

\begin{equation}
\min_G \sum_{ij} W_{ij} \| g_i - g_j \|_2
\end{equation}

We also aim to keep the consistency between spectral embedding and shrunk patterns. Hence, we propose to  minimize the following loss function directly.

\begin{equation}
\min_G \|G - F\|_2^2
\end{equation}

To this end, we formulate the objection function as follows:

\begin{equation}
\min_G \|G - F\|_2^2 + \gamma \sum_{i, j} W_{ij} \| g_i - g_j \|_2
\end{equation}
where $\gamma$ is a balance parameter.

It can be easily proved that our formulation is convex. Due to the space limit, we omit the proof here. Since our method exploits shrunk patterns as the input for clustering, we name it Spectral Shrunk Clustering (SSC).

As indicated in \cite{mazhigang1,DBLP:conf/cikm/KongDH11}, the least square loss function is not robust to outliers. To make our method even more effective, we follow \cite{mazhigang1,l21norm} and employ $l_{2,1}$-norm to handle the outliers. The objective function is rewritten as follows:

\begin{equation}
\min_G \|G - F\|_{2, 1} + \gamma \sum_{i, j} W_{ij} \| g_i - g_j \|_2
\label{finalsc}
\end{equation}

\subsection{Optimization}

The proposed function involves the $l_{2,1}$-norm, which is difficult to solve in a closed form. We propose to solve this problem in the following steps. Denote $H = G - F$ and $H = [h^1, \cdots, h^d]$, where $d$ is the dimension of spectral embedding. The objective function can be rewritten as follows:

\begin{equation}
\min_G Tr((G-F)^TS(G-F)) + \gamma \sum_{ij} w_{ij} \|g_i - g_j \|_2
\end{equation}
where
\begin{equation}
\label{Sval}
S = \begin{bmatrix}
\frac{1}{2\|h^1\|_2} & & \\
    &  \ddots  & \\
    &     & \frac{1}{2\|h^d\|_2}
\end{bmatrix}.
\end{equation}

Denote a Laplacian matrix $\widetilde{L} = \widetilde{D} - \widetilde{W}$, where $\widetilde{W}$ is a re-weighted weight matrix defined by

\begin{equation}
\widetilde{W}_{ij} = \frac{W_{ij}}{2\|g_i - g_j\|_2}
\label{rew}
\end{equation}
$\widetilde{D}$ is a diagonal matrix with the $i$-th diagonal element as $\sum_j \widetilde{W}_{ij}$.

By simple mathematical deduction, the objective function arrives at:

\begin{equation}
\min_G Tr((G-F)^TS(G-F)) + \gamma Tr(G^T\widetilde{L}G).
\label{derivative}
\end{equation}

By setting the derivative of Eq. \eqref{derivative} to $G$ to 0, we have:

\begin{equation}
\label{cri}
G = (S + \gamma \widetilde{L})^{-1}SF.
\end{equation}

Based on the above mathematical deduction, we propose an iterative algorithm to optimize the objective function in Eq. \eqref{objsc}, which is summarized in Algorithm 1. Once the shrunk patterns $G$ are obtained, we perform $K$-means clustering on it to get the final clustering result.

\begin{algorithm}
\label{alg:1}
\caption{Optimization Algorithm for SSC}
 \SetAlgoLined
 \KwData{Data $X \in \mathbb{R}^{d \times n} $, Parameter $\gamma$ and the number of clusters $c$}
 \KwResult{\\
 ~~~~~~~~~The discrete cluster assignment $Y \in \mathbb{R}^{n \times c}$}
 Compute the normalized Laplacian matrix $L_n$ \;
 Obtain the spectral embedding $F$ by using the traditional spectral clustering \;
 Compute the similarity matrix $W$ using the spectral embedding $F$ \;
 Obtain the Laplacian matrix with the reweighted weight matrix according to Eq. \eqref{rew} \;
 Set $t=0$ \;
 Initialize $G_0 \in \mathbb{R}^{n \times c}$\;
  \Repeat{Convergence}{
  Compute $H_t = G_t - F$ \;
  Compute the diagonal matrix $S_t$ according to \eqref{Sval}  \;
  Compute $G_{t+1}$ according to $G_{t+1} = (S_t + \gamma \widetilde{W})^{-1}S_tX$ \;
  t = t + 1 \;
  }
  Based on $G^*$, compute the discrete cluster assignment matrix $Y$ by using $K$-means clustering\;
  Return the discrete cluster assignment matrix $Y$.
\end{algorithm}

\subsection{Convergence Analysis}
To prove the convergence of the Algorithm \ref{alg:1}, we need the following lemma \cite{l21norm}.

\begin{lemma}
\label{lemma1}
For any nonzero vectors $g, g_t~\in \mathbb{R}^c$, the following inequality holds:
\begin{equation}
\|g\|_2 - \|g\|_2^2/2\|g_t\|_2 \leq \|g_t\|_2 - \|g_t\|_2^2/2\|g_t\|_2
\end{equation}
\end{lemma}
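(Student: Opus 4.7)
The plan is to reduce this vector inequality to a one-variable scalar inequality. Since $g$ and $g_t$ are nonzero, set $a = \|g\|_2 > 0$ and $b = \|g_t\|_2 > 0$. The right-hand side collapses immediately: $b - b^2/(2b) = b/2$. So the lemma is equivalent to the purely scalar claim $a - a^2/(2b) \le b/2$, and all dependence on the dimension $c$ and on the geometry of the vectors beyond their norms disappears.

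Next, I would clear the denominator by multiplying through by the positive quantity $2b$, turning the target into $2ab - a^2 \le b^2$, or equivalently $0 \le a^2 - 2ab + b^2 = (a-b)^2$. This final inequality is manifestly true for all real $a,b$, with equality iff $a = b$, i.e.\ iff $\|g\|_2 = \|g_t\|_2$. Unwinding the substitution completes the argument.

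I do not expect a real obstacle here: the lemma is simply the standard tangent-line (concavity) bound for $\sqrt{\cdot}$, which underlies iteratively reweighted schemes for $\ell_{2,1}$-type penalties and reduces, after the substitution above, to the trivial fact that a square is nonnegative. The only hypothesis to flag carefully is $g_t \neq 0$, which ensures $2\|g_t\|_2 > 0$ so that dividing by it is legitimate; the assumption that $g$ is nonzero is not actually needed for the inequality itself, but is natural in the intended application, where $g$ plays the role of the next iterate's direction and $g_t$ the current one, and the bound will be summed over samples to produce a surrogate majorizer that drives the monotone decrease of the objective in Algorithm~1.
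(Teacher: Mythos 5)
Your proof is correct. The paper itself does not prove this lemma---it simply cites it from the $\ell_{2,1}$-norm reference---but your argument (reducing to the scalar inequality $a - a^2/(2b) \le b/2$ and recognizing it as $(a-b)^2 \ge 0$ after clearing the positive denominator $2b$) is exactly the standard derivation given in that cited source, and your remark that only $g_t \neq 0$ is actually needed is accurate.
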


The following theorem guarantees that the problem in Eq. \eqref{finalsc}converges to the global optimum by Algorithm~\ref{alg:1} .

\begin{theorem}
The Algorithm \ref{alg:1} monotonically decreases the objective function value of the problem in Eq. \eqref{finalsc} in each iteration, thus making it converge to the global optimum.
\end{theorem}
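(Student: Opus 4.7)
The plan is to interpret Algorithm~\ref{alg:1} as a majorization--minimization scheme. First I would observe that the closed-form update in Eq.~\eqref{cri} is exactly the global minimizer of the quadratic surrogate $\Psi_t(G):=\mathrm{Tr}((G-F)^TS_t(G-F))+\gamma\,\mathrm{Tr}(G^T\widetilde{L}_tG)$, since zeroing $\partial\Psi_t/\partial G$ produces $G_{t+1}=(S_t+\gamma\widetilde{L}_t)^{-1}S_tF$; hence $\Psi_t(G_{t+1})\le\Psi_t(G_t)$ is free. The task then reduces to transferring this decrease of the surrogate into a decrease of the actual objective $\Phi(G):=\|G-F\|_{2,1}+\gamma\sum_{ij}W_{ij}\|g_i-g_j\|_2$ of Eq.~\eqref{finalsc}, and this is exactly what two applications of Lemma~\ref{lemma1} supply.

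In the first application I would set $g=h_{t+1}^i$, $g_t=h_t^i$ where $h^i$ ranges over the blocks of $G-F$ indexed by the diagonal of $S_t$ in Eq.~\eqref{Sval}, and sum over $i$ to obtain
\[\|G_{t+1}-F\|_{2,1}-\mathrm{Tr}((G_{t+1}-F)^TS_t(G_{t+1}-F))\le\|G_t-F\|_{2,1}-\mathrm{Tr}((G_t-F)^TS_t(G_t-F)).\]
In the second I would take $g=g_{t+1,i}-g_{t+1,j}$, $g_t=g_{t,i}-g_{t,j}$, multiply by $W_{ij}$, and sum over $(i,j)$; combined with Eq.~\eqref{rew} and the identity $2\,\mathrm{Tr}(G^T\widetilde{L}_tG)=\sum_{ij}\widetilde{W}_{ij}\|g_i-g_j\|_2^2$, this yields the analogous inequality relating $\sum_{ij}W_{ij}\|g_i-g_j\|_2$ to $\mathrm{Tr}(G^T\widetilde{L}_tG)$ at $G_{t+1}$ versus $G_t$. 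Adding the two inequalities with the $\gamma$-weighting that matches $\Psi_t$ and then invoking $\Psi_t(G_{t+1})\le\Psi_t(G_t)$ telescopes the surrogate contributions away and leaves $\Phi(G_{t+1})\le\Phi(G_t)$.

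For the global-optimum claim I would invoke the convexity of $\Phi$ asserted in the text just before Eq.~\eqref{finalsc}. Because $\Phi\ge 0$ and $\{\Phi(G_t)\}$ is monotone, the sequence converges; at any limit point $G^\star$ the fixed-point equation $(S(G^\star)+\gamma\widetilde{L}(G^\star))G^\star=S(G^\star)F$ coincides, after substituting the definitions of $S$ and $\widetilde{L}$, with the subgradient optimality condition $0\in\partial\Phi(G^\star)$, which by convexity certifies that $G^\star$ is a global minimizer. The main technical obstacle I anticipate is the factor-of-two bookkeeping that must thread through both Lemma~\ref{lemma1} applications and the quadratic cross-terms of $\Psi_t$, compounded by the degenerate case in which some $\|h_t^i\|_2$ or $\|g_{t,i}-g_{t,j}\|_2$ vanishes and the corresponding entries of $S_t$ or $\widetilde{W}_t$ blow up; the standard remedy is to regularize each denominator by $\varepsilon>0$, carry the argument through, and send $\varepsilon\downarrow 0$ to recover the conclusion in subgradient form.
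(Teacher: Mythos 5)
Your proposal follows the same majorization--minimization route as the paper: both hinge on the fact that $G_{t+1}$ minimizes the quadratic surrogate built from $S_t$ and $\widetilde{L}_t$, and both use Lemma~\ref{lemma1} to convert the surrogate's decrease into a decrease of the true objective, concluding global optimality from convexity plus the stationarity condition \eqref{cri}.

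The one substantive difference is in your favor. The paper applies Lemma~\ref{lemma1} only to the graph term $\sum_{ij}W_{ij}\|g_i-g_j\|_2$ and stops at an inequality whose data-fidelity part is still $f(G)=\mathrm{Tr}((G-F)^TS_t(G-F))$ rather than $\|G-F\|_{2,1}$; strictly read, that final display does not yet bound the objective of Eq.~\eqref{finalsc}. Your first application of Lemma~\ref{lemma1} --- to the blocks $h^i$ of $G-F$ indexed by the diagonal of $S_t$ in \eqref{Sval} --- is exactly the missing step that converts the weighted quadratic back into the $\ell_{2,1}$ norm, so your argument actually closes a gap the paper leaves open. Your two further refinements, phrasing the limit-point optimality as $0\in\partial\Phi(G^\star)$ rather than merely ``satisfies \eqref{cri},'' and regularizing the denominators by $\varepsilon>0$ to handle vanishing $\|h^i_t\|_2$ or $\|g_{t,i}-g_{t,j}\|_2$ (where $S_t$ and $\widetilde{W}_t$ are undefined and Lemma~\ref{lemma1} requires nonzero vectors), are also improvements on the paper's treatment, which ignores both issues. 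No gaps in your plan; carrying out the factor-of-two bookkeeping you mention is routine.
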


\begin{proof}
Define $f(G) = Tr((G - F)^TS(G-F)$. According to Algorithm \ref{alg:1}, we know that

\begin{equation}
G_{t+1} = \arg \min_{G} f(G) + \gamma \sum_{i,j} (\widetilde{W})_{ij} \|g_i - g_j\|_2^2
\end{equation}

Note that $(\widetilde{W_t})_{ij} = \frac{W_{ij}}{2\|g_i^t - g_j^t\|_2}$, so we have

\begin{equation}
\label{sum1}
\begin{aligned}
& f(G_{t+1}) + \gamma \sum_{ij} \frac{W_{ij}\|g_i^{t+1} - g_j^{t+1}\|_2^2}{2\|g_i^t - g_j^t\|_2} \\
\leq & f(G_{t}) + \gamma \sum_{ij} \frac{W_{ij}\|g_i^{t} - g_j^{t}\|_2^2}{2\|g_i^t - g_j^t\|_2}
\end{aligned}
\end{equation}

According to Lemma \ref{lemma1}, we have

\begin{equation}
\label{sum2}
\begin{aligned}
& \sum_{ij} W_{ij}(\|g_i^{t+1} - g_j^{t+1}\|_2 - \frac{\|g_i^{t+1} - g_j^{t+1}\|_2^2}{2\|g_i^t - g_j^t\|_2}) \\
\leq & \sum_{ij} W_{ij}(\|g_i^{t} - g_j^{t}\|_2 - \frac{\|g_i^{t} - g_j^{t}\|_2^2}{2\|g_i^t - g_j^t\|_2})
\end{aligned}
\end{equation}

By summing Eq. \eqref{sum1} and Eq. \eqref{sum2}, we arrive at:

\begin{equation}
\begin{aligned}
& f(G_{t+1}) + \gamma \sum_{ij} W_{ij}\|g_i^{t+1} - g_j^{t+1}\|_2 \\
\leq & f(G_{t}) + \gamma \sum_{ij} W_{ij}\|g_i^{t} - g_j^{t}\|_2
\end{aligned}
\end{equation}

Thus, Algorithm \ref{alg:1} monotonically decreases the objective function value of the problem in Eq. \eqref{finalsc} in each iteration $t$. When converged, $G_t$ and $\widetilde{L}_t$ satisfy Eq. \eqref{cri}. As the problem in Eq. \eqref{finalsc} is convex, satisfying Eq. \eqref{cri} indicates that $G_t$ is the global optimum solution of the problem in Eq. \eqref{finalsc}. Therefore, using Algorithm 1 makes the problem in Eq. \eqref{finalsc} converge to the global optimum.

\end{proof}

\section{Experiment}
In this section, we perform extensive experiments on a variety of applications to test the performance of our method SSC. We compare SSC to several clustering algorithms including the classical $K$-means, the classical spectral clustering (SC), PCA\_Kmeans \cite{surcs}, PCA spectral clustering (PCA\_SC), LDA\_Kmeans \cite{DBLP:conf/icml/DingL07}, LDA spectral clustering (LDA\_SC), Local Learning Clustering (LLC) \cite{LLC} and SPLS \cite{DBLP:journals/ipm/HouNJZW13}.

\subsection{Datasets}

A variety of datasets are used in our experiments which are described as follows. The AR dataset \cite{AR} contains 840 faces of 120 different people. We utilize the pixel value as the feature representations. The JAFFE dataset \cite{JAFFE} consists of 213 images of different facial expressions from 10 different Japanese female models. The images are resized to $26 \times 26$ and represented by pixel values. The ORL dataset \cite{ORL} consists of 40 different subjects with 10 images each. We also resize each image to $32 \times 32$ and use pixel values to represent the images. The UMIST face dataset \cite{UMIST} consists of 564 images of 20 individuals with mixed race, gender and appearance. Each individual is shown in a range of poses from profile to frontal views. The pixel value is used as the feature representation. The BinAlpha dataset contains 26 binary hand-written alphabets and we randomly select 30 images for every alphabet. The MSRA50 dataset contains 1799 images from 12 different classes. We resize each image to $32 \times 32$ and use the pixel values as the features. The YaleB dataset \cite{yaleb} contains 2414 near frontal images from 38 persons under different illuminations. Each image is resized to $32 \times 32$ and the pixel value is used as feature representation. We additionally use the USPS dataset to validate the performance on handwritten digit recognition. The dataset consists of 9298 gray-scale handwritten digit images. We resize the images to $16 \times 16$ and use pixel values as the features.

\subsection{Setup}

The size of neighborhood, $k$ is set to 5 for all the spectral clustering algorithms. For parameters in all the comparison algorithms, we tune them in the range of $\{10^{-6}, 10^{-3}, 10^0, 10^3, 10^6 \}$ and report the best results. Note that the results of all the clustering algorithms vary on different initialization. To reduce the influence of statistical variation, we repeat each clustering 50 times with random initialization and report the results corresponding to the best objective function values. For all the dimensionality reduction based K-means and Spectral clustering, we project the original data into a low dimensional subspace of 10 to 150 and report the best results.

\subsection{Evaluation Metrics}

Following most work on clustering, we use clustering accuracy (ACC) and normalized mutual information (NMI) as our evaluation metrics in our experiments.

Let $q_i$ represent the clustering label result from a clustering algorithm and $p_i$ represent the corresponding ground truth label of an arbitrary data point $x_i$. Then $ACC$ is defined as follows:

\begin{equation}
ACC = \frac{\sum_{i=1}^n \delta (p_i, map(q_i))}{ n },
\end{equation}
where $\delta(x, y) = 1$ if $x=y$ and $\delta (x, y) = 0$ otherwise. $map(q_i)$ is the best mapping function that permutes clustering labels to match the ground truth labels using the Kuhn-Munkres algorithm. A larger ACC indicates better clustering performance.

For any two arbitrary variables $P$ and $Q$, NMI is defined as follows \cite{NMI}:

\begin{equation}
NMI = \frac{I(P, Q)}{\sqrt{H(P)H(Q)}},
\end{equation}
where $I(P, Q)$ computes the mutual information between $P$ and $Q$, and $H(P)$ and $H(Q)$ are the entropies of $P$ and $Q$. Let $t_l$ represent the number of data in the cluster $\mathcal{C}_l(1 \leq l \leq c)$ generated by a clustering algorithm and $\widetilde{t_h}$ represent the number of data points from the $h$-th ground truth class. NMI metric is then computed as follows \cite{NMI}:

\begin{equation}
NMI = \frac{\sum_{l=1}^c \sum_{h=1}^c t_{l,h} log(\frac{n \times t_{l, h}}{•t_l\widetilde{t_h}})}{\sqrt{(\sum_{l=1}^c t_l \log \frac{t_l}{n})(\sum_{h=1}^c \widetilde{t_h} \log \frac{\widetilde{t_h}}{n})}},
\end{equation}
where $t_{l,h}$ is the number of data samples that lie in the intersection between $\mathcal{C}_l$ and $h$-th ground truth class. Similarly, a larger NMI indicates better clustering performance.

\begin{table*}
\caption{Performance comparison (ACC\%$\pm$Standard Deviation) between $K$-means, Spectral Clustering, PCA\_Kmeans, LDA\_Kmeans, PCA\_SC, LDA\_SC, LLC, SPLS and SSC.}
\centering
\small
\begin{tabular}{|c||c|c|c|c|c|c|c|c|}

\hline

  &  AR &  JAFFE &  ORL &  UMIST &  binalpha & MSRA50 & YaleB & USPS \\

\hline \hline

$K$-means & $36.3 \pm 1.4$ & $75.6 \pm 1.8$ & $60.5 \pm 1.8$ & $41.3 \pm 1.6$ & $41.7 \pm 1.1$ & $46.2 \pm 1.7$ & $14.4 \pm 1.5$ & $65.4 \pm 1.7$ \\

\hline

SC & $41.6 \pm 2.1$ & $76.1 \pm 1.6$ & $72.7 \pm 2.3$ & $52.2 \pm 1.4$ & $43.6 \pm 1.5$ & $52.3 \pm 1.8$ & $34.8 \pm 1.4$ & $64.3 \pm 1.4$ \\

\hline

PCA\_Kmeans & $39.8 \pm 1.8$ & $75.8 \pm 1.5$ & $64.5 \pm 2.3$ & $48.8 \pm 1.7$ & $42.4 \pm 1.5$ & $56.0 \pm 1.9$ & $24.9 \pm 1.8$ & $69.4 \pm 1.8$\\

\hline

PCA\_SC & $43.2 \pm 1.7$ & $76.9 \pm 1.8$ & $67.8 \pm 2.1$ & $54.1 \pm 1.9$ & $44.3 \pm 1.8$ & $54.9 \pm 1.6$ & $36.8 \pm 2.0$ & $69.1 \pm 1.5$ \\

\hline

LDA\_Kmeans & $40.4 \pm 1.5$ & $76.5 \pm 1.7$ & $65.6 \pm 2.6$ & $49.7 \pm 1.8$ & $42.9 \pm 1.7$ & $56.4 \pm 1.8$ & $26.1 \pm 1.8$ & $70.1 \pm 1.3$ \\

\hline

LDA\_SC & $44.5 \pm 1.3$ & $77.4 \pm 1.9$ & $68.3 \pm 2.4$ & $54.7 \pm 1.5$ & $45.1 \pm 1.4$ & $55.1 \pm 1.7$ & $38.2 \pm 1.6$ & $70.4 \pm 1.5$ \\

\hline

LLC & $48.7 \pm 1.6$ & $78.6 \pm 1.5$ & $71.5 \pm 2.2$ & $63.3 \pm 1.8$ & $40.7 \pm 1.8$ & $48.1 \pm 1.4$ & $38.2 \pm 1.5$ & $63.9 \pm 1.7$ \\

\hline

SPLS & $49.2 \pm 1.4$ & $79.5 \pm 2.1$ & $74.2 \pm 1.8$ & $70.4 \pm 1.6$ & $48.5 \pm 1.9$ & $60.3 \pm 1.3$ & $47.3 \pm 1.7$ & $71.4 \pm 1.6$ \\

\hline

SSC & $\mathbf{51.3 \pm 1.5}$ & $\mathbf{81.2 \pm 1.6}$ & $\mathbf{76.0 \pm 1.6}$ & $\mathbf{71.1 \pm 1.8}$ & $\mathbf{49.4 \pm 1.3}$ & $\mathbf{63.2 \pm 1.1}$ & $\mathbf{49.8 \pm 1.6}$ & $\mathbf{75.5 \pm 1.9}$ \\

\hline

\end{tabular}

\label{ACC}

\end{table*}
\begin{table*}
\caption{Performance Comparison (NMI\%$\pm$Standard Deviation) between $K$-means, Spectral Clustering, PCA\_Kmeans, LDA\_Kmeans, PCA\_SC, LDA\_SC, LLC, SPLS and SSC.}

\centering

\small

\begin{tabular}{|c||c|c|c|c|c|c|c|c|}

\hline

  &  AR &  JAFFE &  ORL &  UMIST &  binalpha & MSRA50 & YaleB & USPS \\

\hline \hline

$K$-means & $68.7 \pm 3.0$ & $79.4 \pm 0.8$ & $80.3 \pm 1.8$ & $64.4 \pm 1.5$ & $58.6 \pm 1.4$ & $56.7 \pm 1.8$ & $17.3 \pm 1.5$ & $67.3 \pm 1.8$ \\

\hline

SC & $71.3 \pm 2.6$ & $80.2 \pm 0.9$ & $85.8 \pm 1.9$ & $72.1 \pm 1.7$ & $59.7 \pm 1.6$ & $70.0 \pm 1.6$ & $55.6 \pm 1.6$ & $69.5 \pm 1.6$ \\

\hline

PCA\_Kmeans & $69.4 \pm 2.8$ & $79.8 \pm 0.8$ & $80.6 \pm 1.6$ & $68.2 \pm 1.8$ & $59.1 \pm 1.8$ & $60.3 \pm 1.5$ & $26.7 \pm 1.8$ & $73.1 \pm 1.9$\\

\hline

PCA\_SC & $70.3 \pm 2.4$ & $81.5 \pm 1.3$ & $86.3 \pm 1.4$ & $72.9 \pm 1.5$ & $60.6 \pm 1.9$ & $72.4 \pm 1.8$ & $38.6 \pm 1.5$ & $74.2 \pm 1.8$ \\

\hline

LDA\_Kmeans & $69.9 \pm 1.9$ & $82.1 \pm 1.4$ & $81.1 \pm 2.1$ & $68.8 \pm 1.5$ & $59.8 \pm 1.6$ & $61.1 \pm 1.9$ & $29.4 \pm 1.6$ & $75.1 \pm 1.6$ \\

\hline

LDA\_SC & $70.8 \pm 1.5$ & $81.9 \pm 0.9$ & $86.8 \pm 1.7$ & $74.1 \pm 2.0$ & $61.3 \pm 1.7$ & $73.2 \pm 1.6$ & $39.9 \pm 1.4$ & $75.4 \pm 1.7$ \\

\hline

LLC & $71.2 \pm 2.4$ & $82.5 \pm 1.7$ & $84.9 \pm 1.5$ & $77.3 \pm 1.8$ & $61.4 \pm 1.9$ & $66.2 \pm 1.6$ & $34.1 \pm 1.3$ & $67.5 \pm 1.5$ \\

\hline

SPLS & $72.4 \pm 1.7$ & $83.1 \pm 2.1$ & $87.2 \pm 1.8$ & $82.2 \pm 1.6$ & $63.6 \pm 1.8$ & $69.6 \pm 1.8$ & $41.4 \pm 1.6$ & $76.5 \pm 1.8$ \\

\hline

SSC & $\mathbf{73.2 \pm 1.4}$ & $\mathbf{84.3 \pm 1.6}$ & $\mathbf{88.6 \pm 1.5}$ & $\mathbf{84.1 \pm 1.5}$ & $\mathbf{64.1 \pm 1.4}$ & $\mathbf{72.2 \pm 1.4}$ & $\mathbf{46.8 \pm 1.3}$ & $\mathbf{79.8 \pm 1.6}$ \\

\hline

\end{tabular}

\label{NMI}

\end{table*}

\subsection{Experimental Results}

The experimental results on listed in Table \ref{ACC} and Table \ref{NMI}. We can see from the two tables that our method is consistently the best algorithm using both evaluation metrics.
We also observe that:
\begin{enumerate}
\item The spectral clustering algorithm and its variants achieve better performance than the classical $k$-means and its variants. This observation suggests that it is beneficial to utilize the pairwise similarities between all data points from a weighted graph adjacency matrix that contains helpful information for clustering.
\item PCA\_Kmeans and LDA\_Kmeans are better than K-means whereas PCA\_SC and LDA\_SC are better than SC. This demonstrates that dimensionality reduction is helpful for improving the cluster performance.
\item LDA\_Kmeans outperforms PCA\_Kmeans while LDA\_SC outperforms PCA\_SC. This indicates that LDA is more capable of keeping the structural information than PCA when doing dimensionality reduction.
\item Among various spectral clustering variants, LLC is the most robust algorithm. This means using a more sophisticated graph Laplacian is beneficial for better exploitation of manifold structure.
\item SPLS is the second best clustering algorithm. This is because it incorporates both the linear and nonlinear structures of original data.
\item Our proposed Spectral Shrunk Clustering (SSC) consistently outperforms the other K-means based and spectral clustering based algorithms. This advantage is attributed to the optimal manifold learning in the low-dimensional subspace and it being tightly coupled with the clustering optimization.
\end{enumerate}

\subsection{Parameter Sensitivity}
In this section, we study the sensitivity of our algorithm \emph{w.r.t.} the parameter $\gamma$ in Eq. \eqref{objsc}. Fig \ref{ParameterSensitivity} shows the accuracy ($y$-axis) of SSC for different $\gamma$ values ($x$-axis) on all the experimental datasets. It can be seen from the figure that the performance varies when different values of $\gamma$ are used. However, except on MSRA50 and USPS datasets, our method attains the best/respectable performance when $\gamma=1$. This indicates that our method has a consistent preference on parameter setting, which makes it uncomplicated to get optimal parameter value in practice.

\begin{figure*}[!ht]
\centering
\subfigure[AR]{
\includegraphics[scale=0.0735]{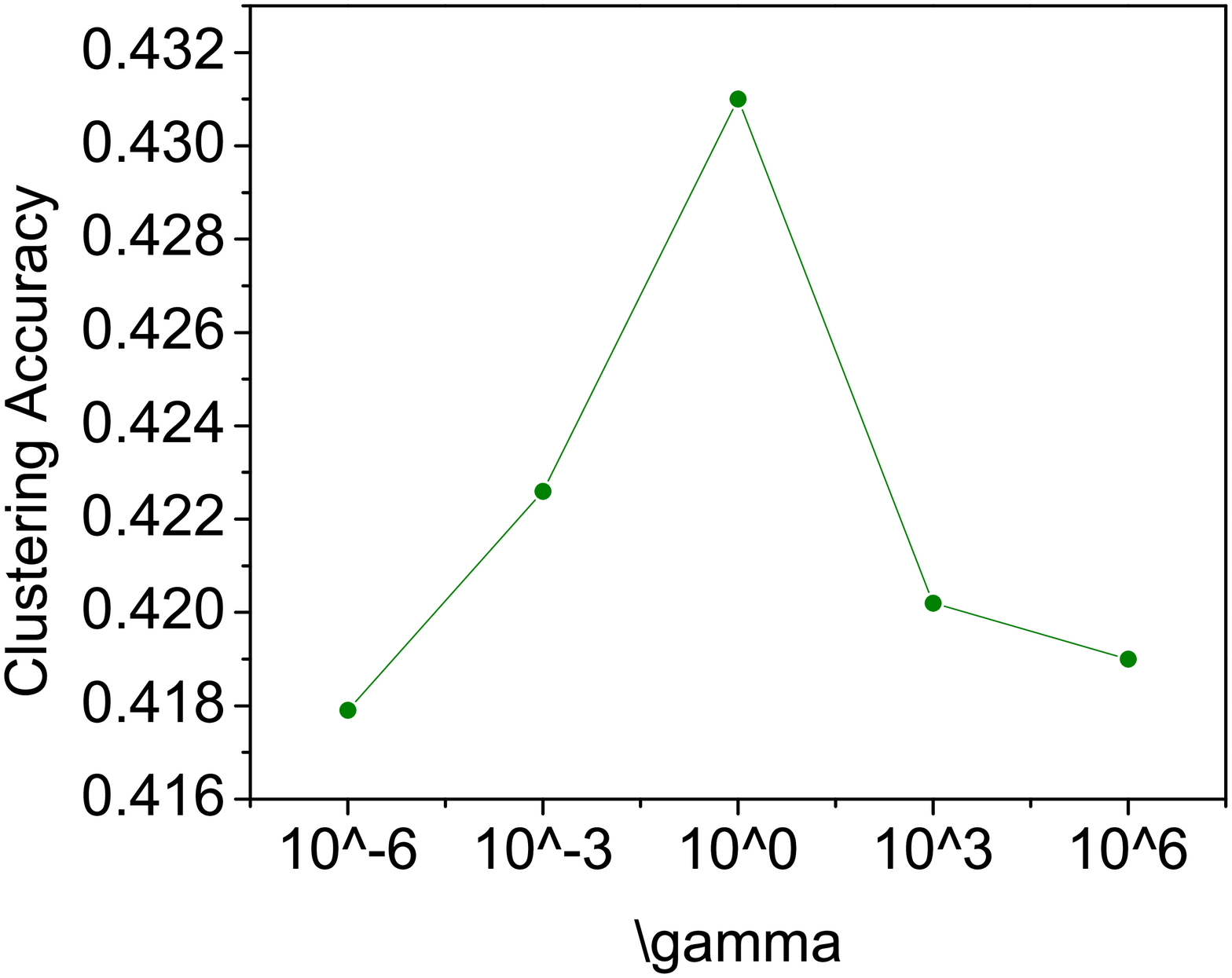}}
\subfigure[JAFFE]{
\includegraphics[scale=0.0735]{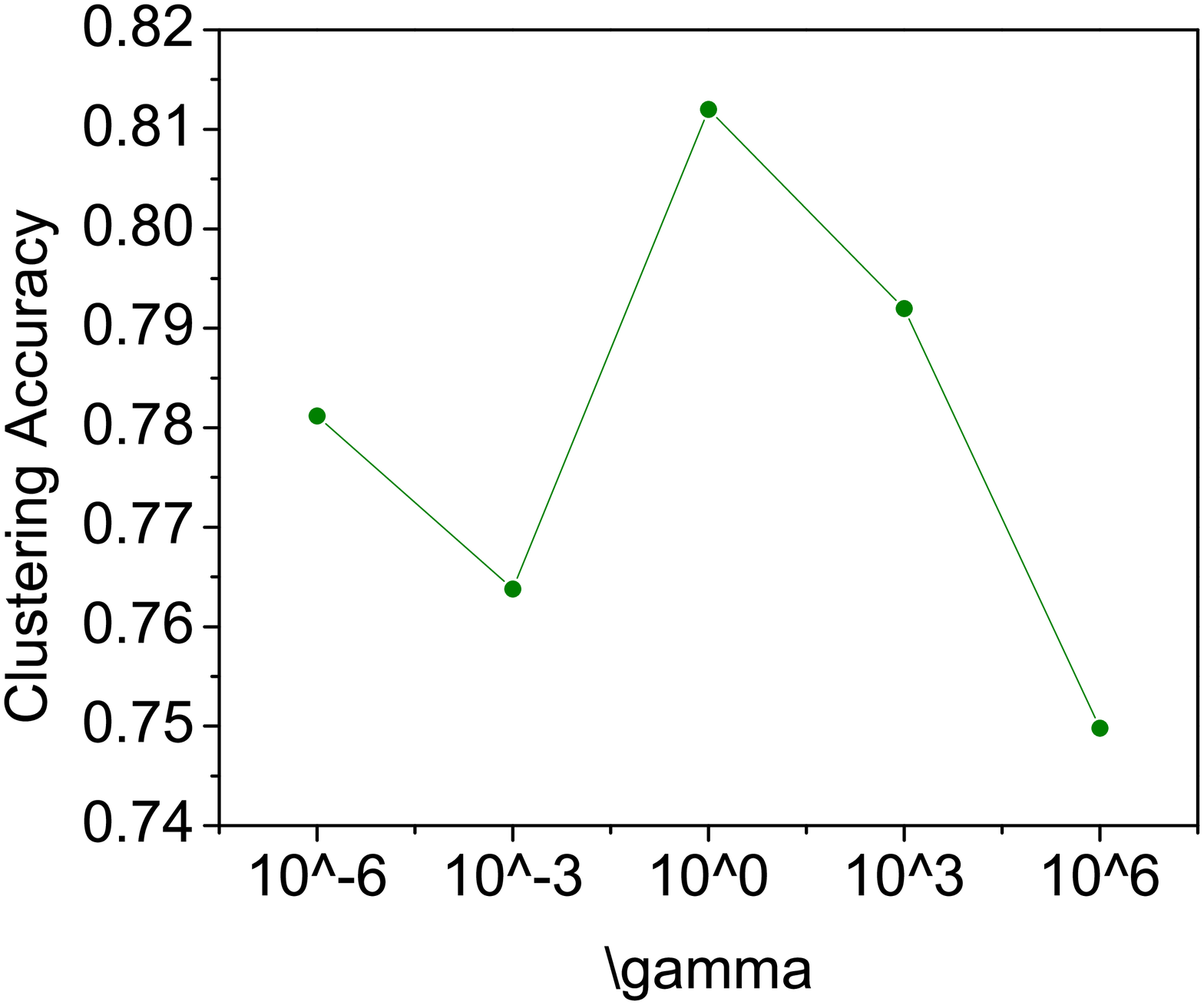}}
\subfigure[ORL]{
\includegraphics[scale=0.0735]{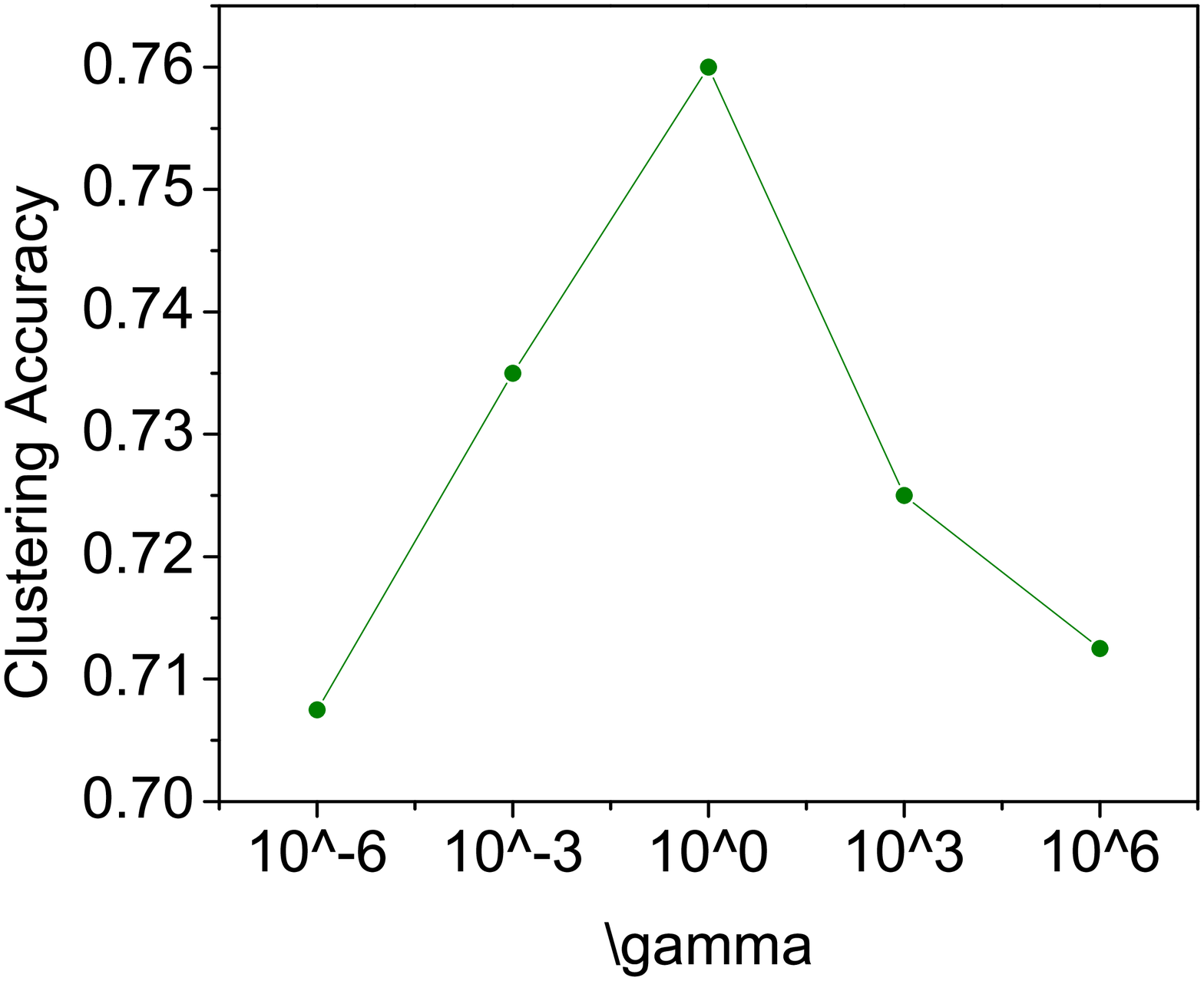}}
\subfigure[UMIST]{
\includegraphics[scale=0.0735]{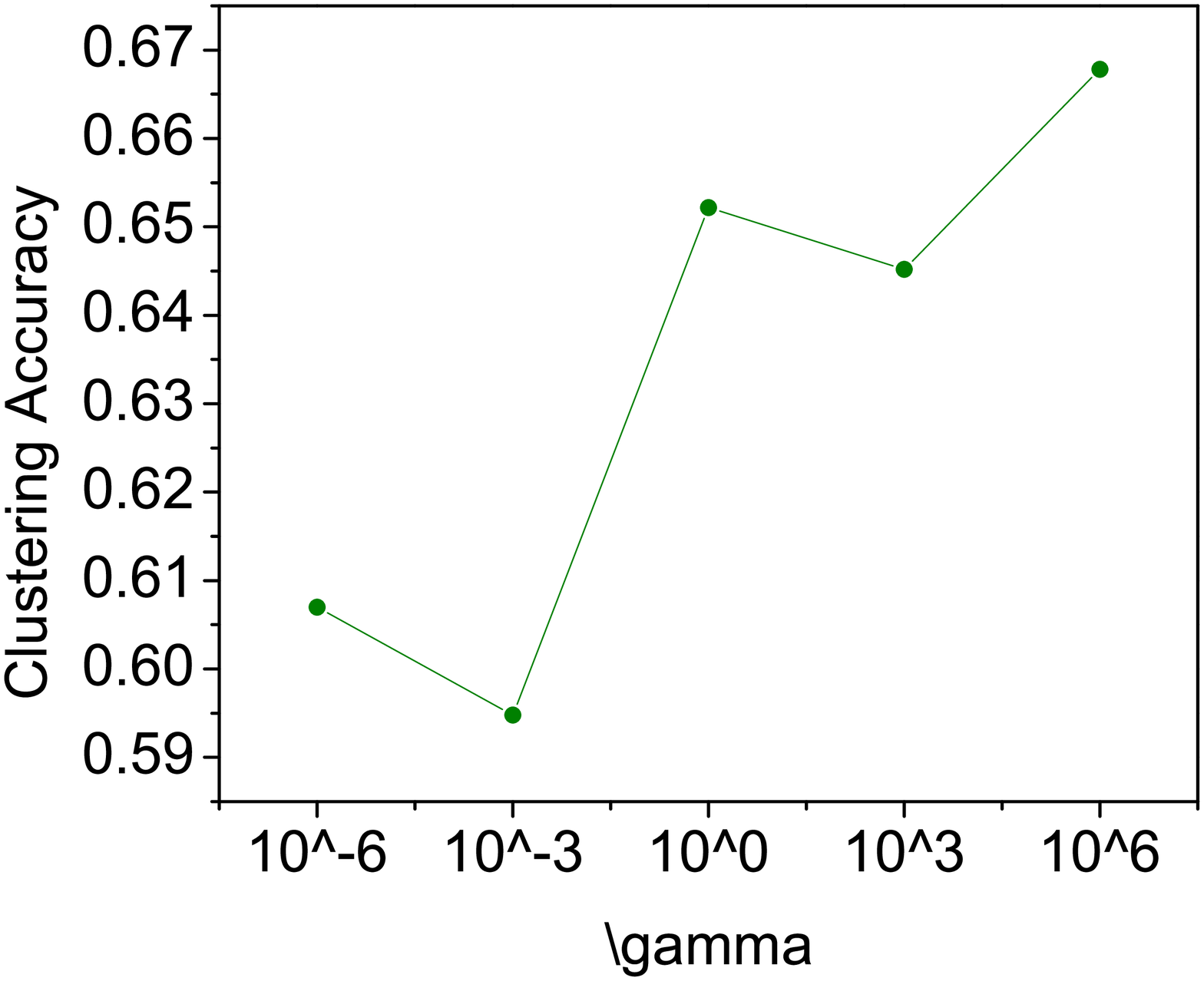}}
\subfigure[binalpha]{
\includegraphics[scale=0.0735]{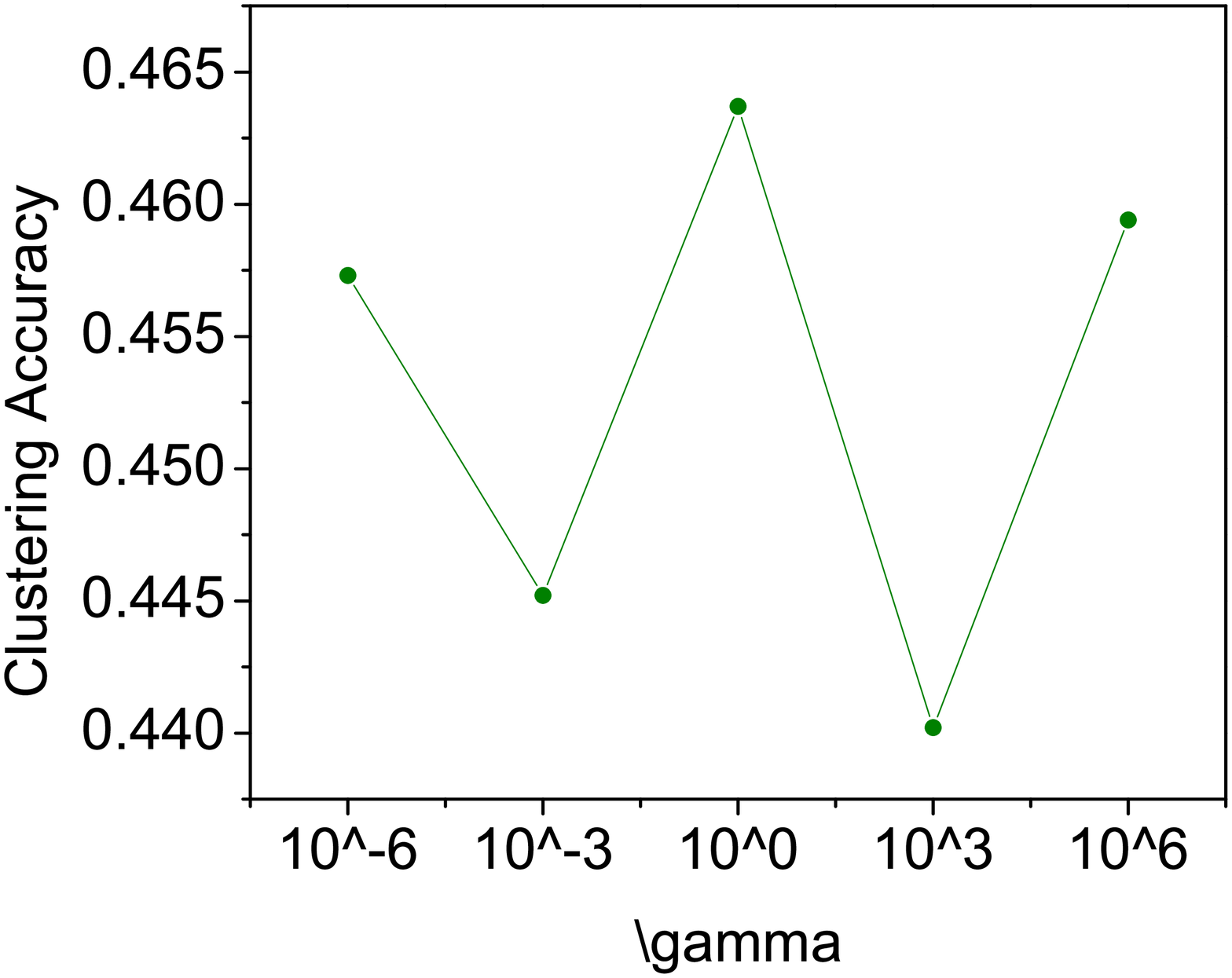}}
\subfigure[MSRA50]{
\includegraphics[scale=0.0735]{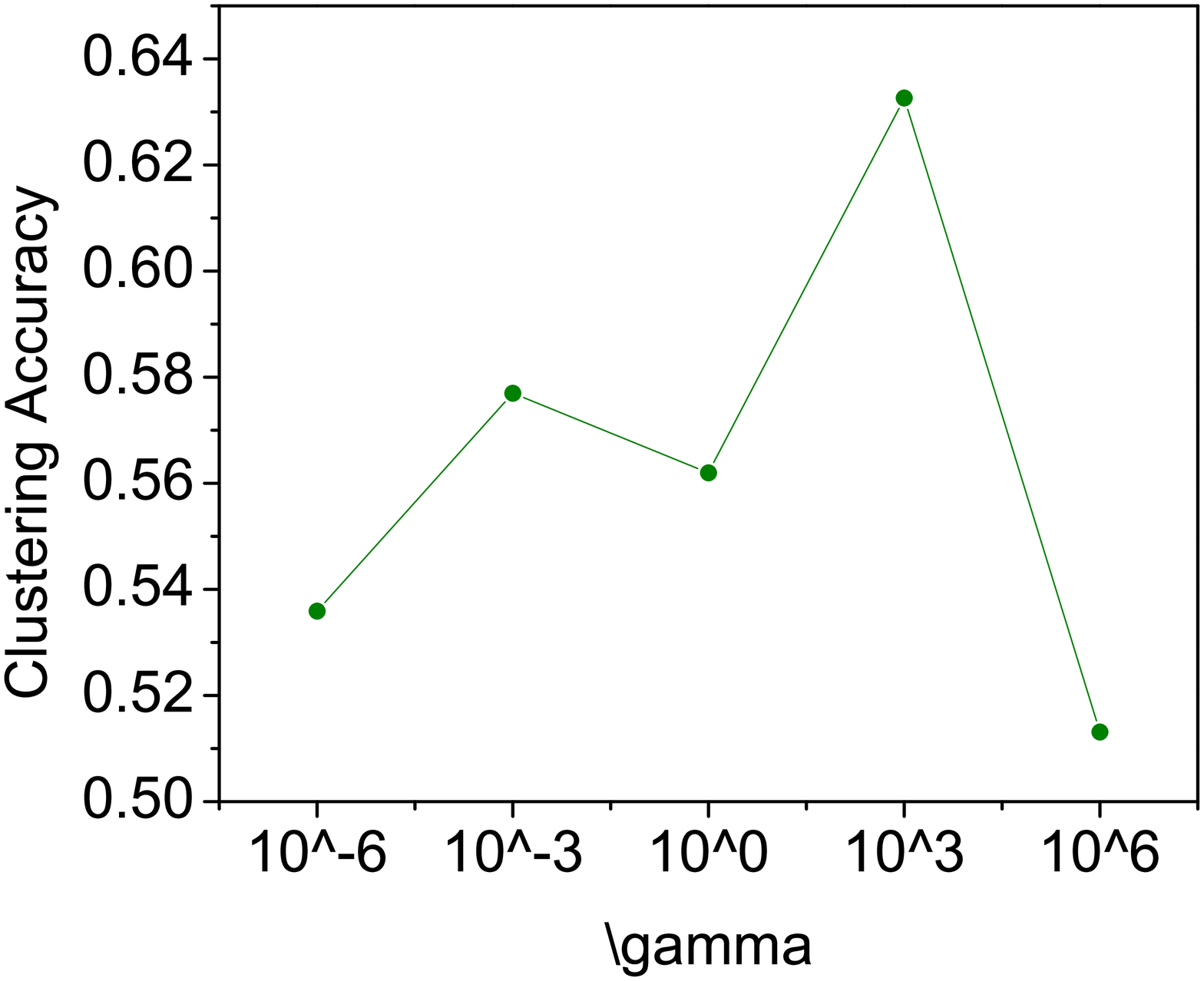}}
\subfigure[YaleB]{
\includegraphics[scale=0.0735]{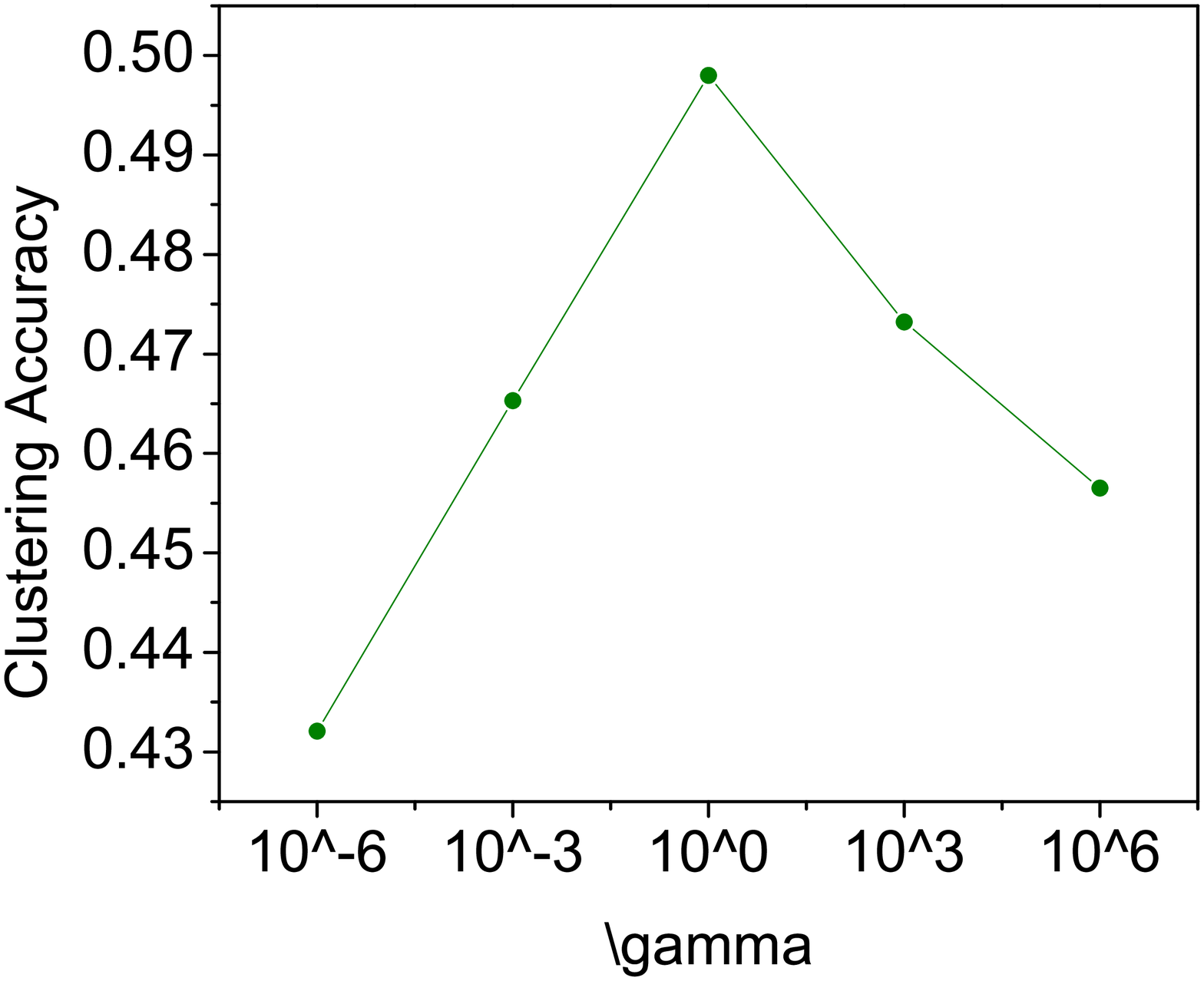}}
\subfigure[USPS]{
\includegraphics[scale=0.0735]{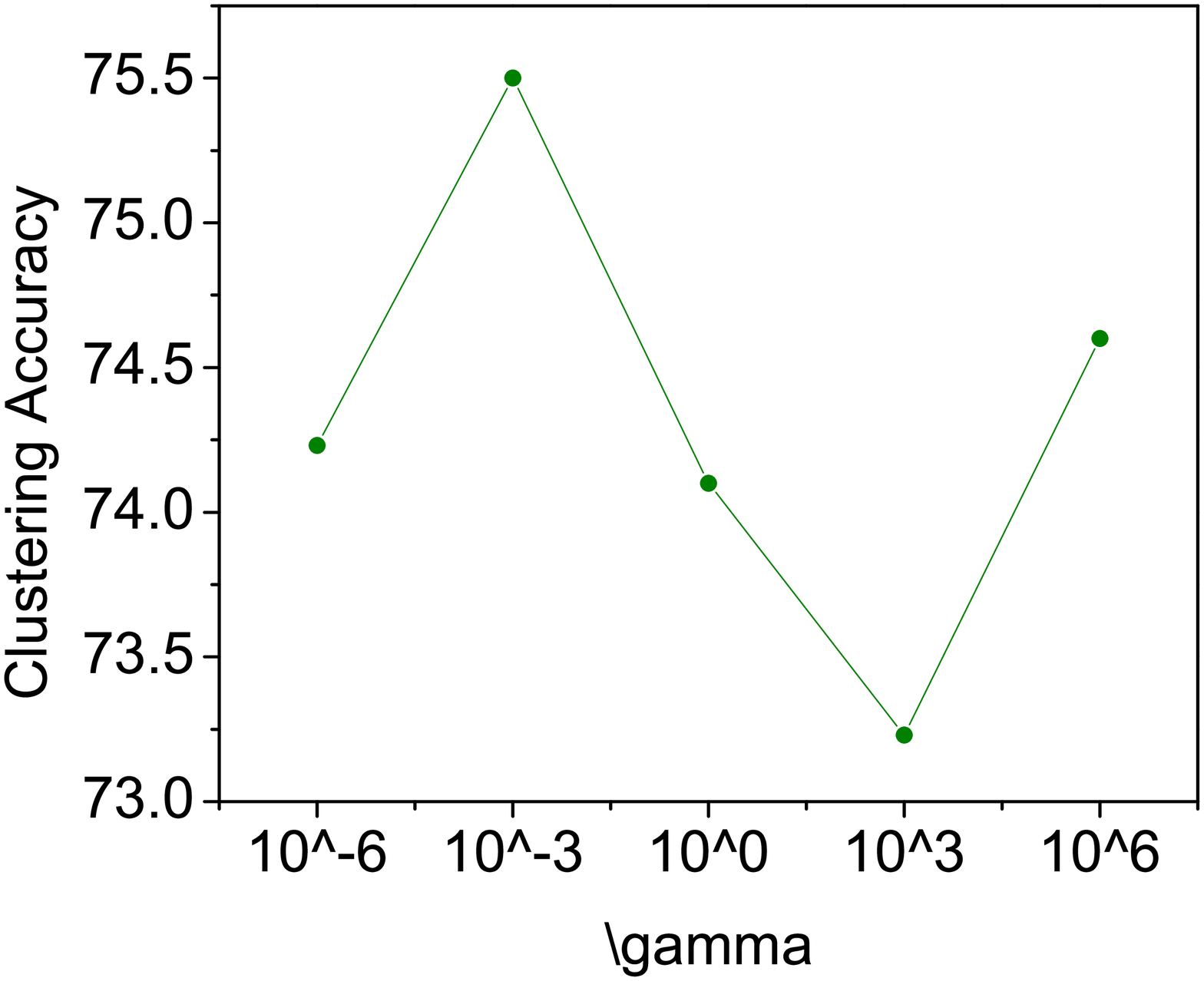}}
\caption{The clustering performance (ACC) variation of our algorithm \emph{w.r.t.} different parameter settings. From the experimental results, we observe that the proposed algorithm has a consistent preference on parameter setting, which makes it uncomplicated to get optimal parameter value in practice.}
\label{ParameterSensitivity}
\end{figure*}

\subsection{Convergence Study}
As mentioned before, the proposed iterative approach in {Algorithm}~\ref{alg:1} monotonically decreases the objective function value in Eq. (\ref{objsc}). In this experiment, we show the convergence curves of the iterative approach on different datasets in Figure \ref{converge}. The parameter $\gamma$ is fixed at 1, which is the median value of the tuned range of the parameters.

It can be observed that the objective function value converges quickly. The convergence experiment demonstrates the efficiency of our algorithm.
\begin{figure*}[!ht]
\centering
\subfigure[AR]{
\includegraphics[scale=0.0735]{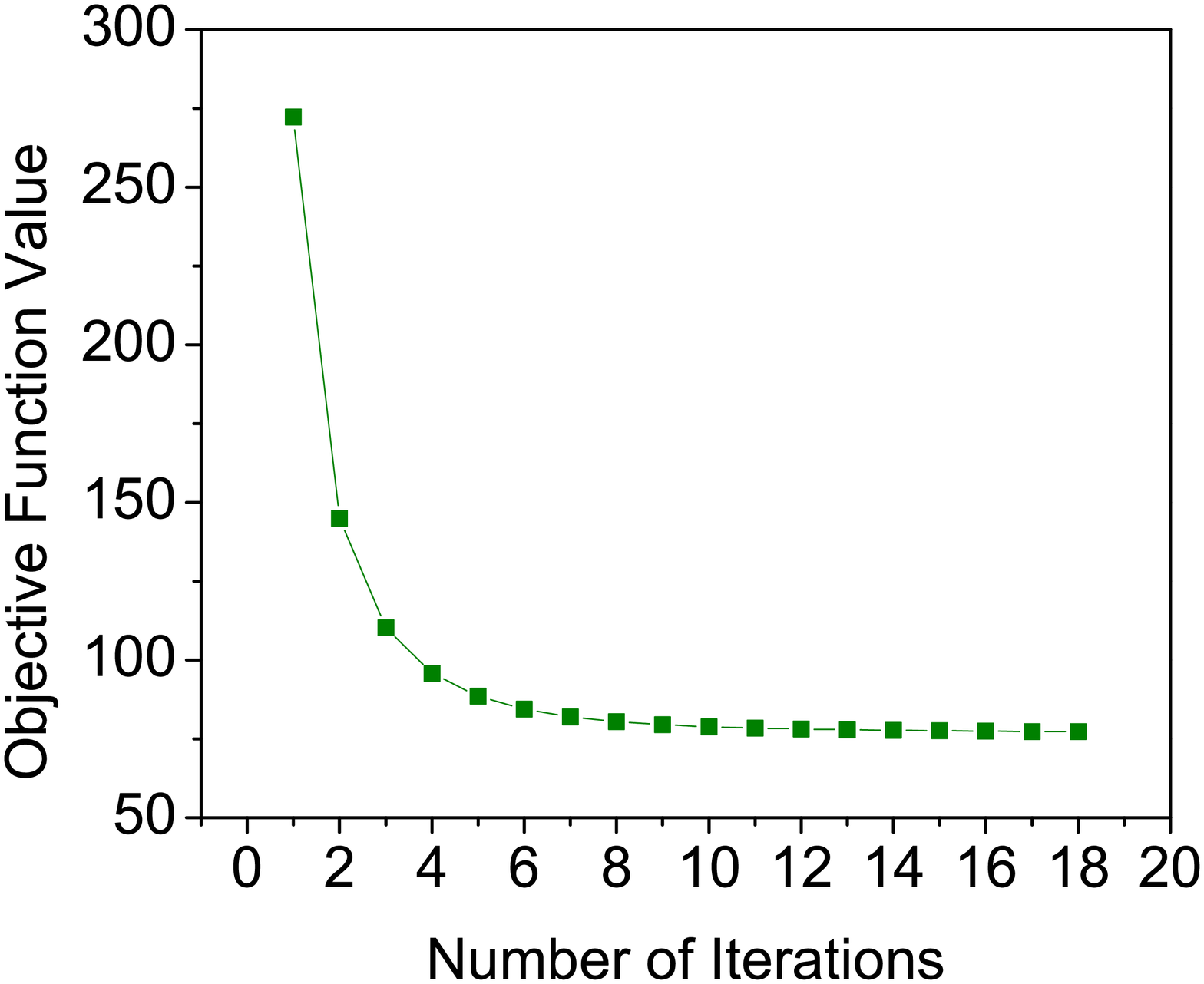}}
\subfigure[JAFFE]{
\includegraphics[scale=0.0735]{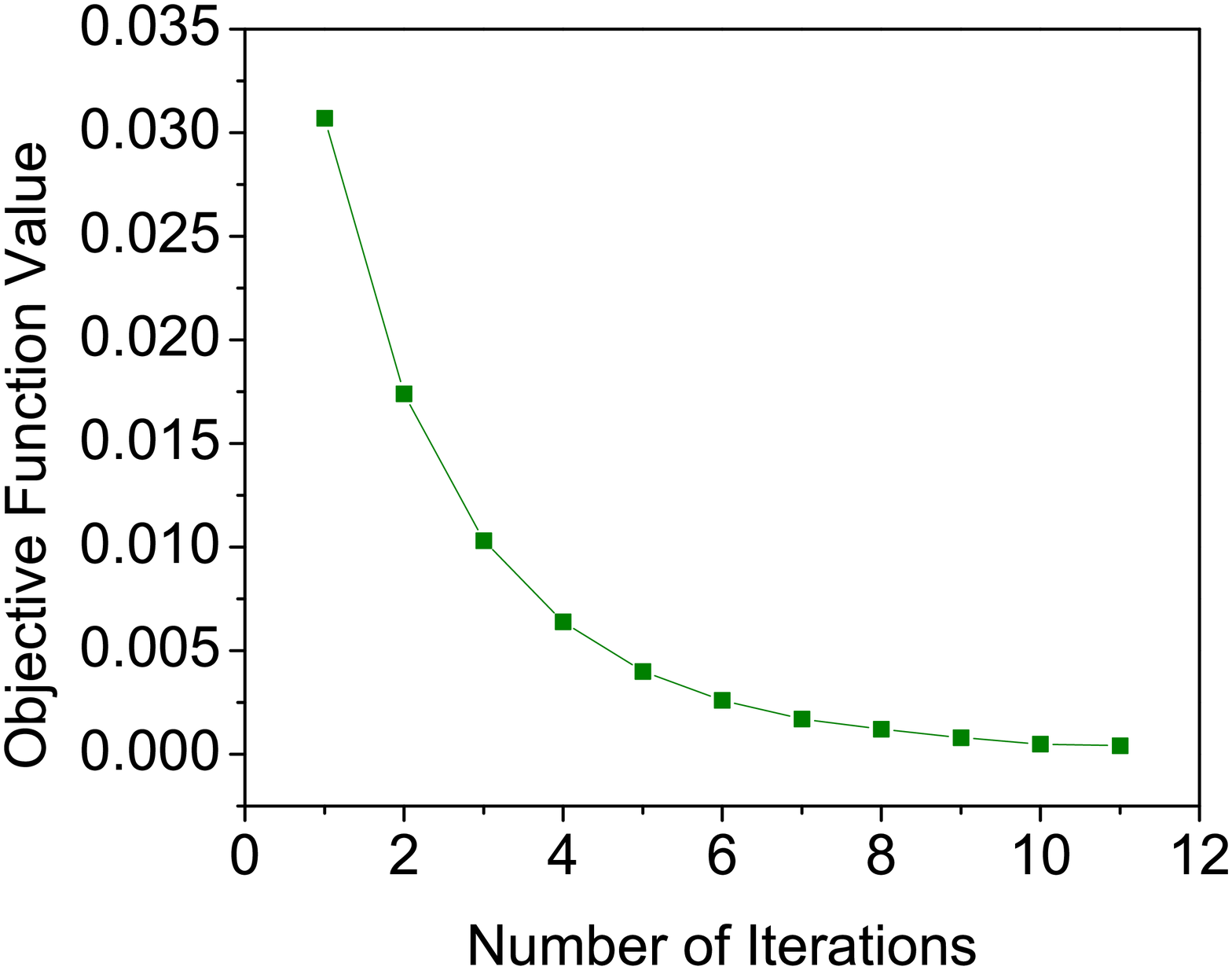}}
\subfigure[ORL]{
\includegraphics[scale=0.0735]{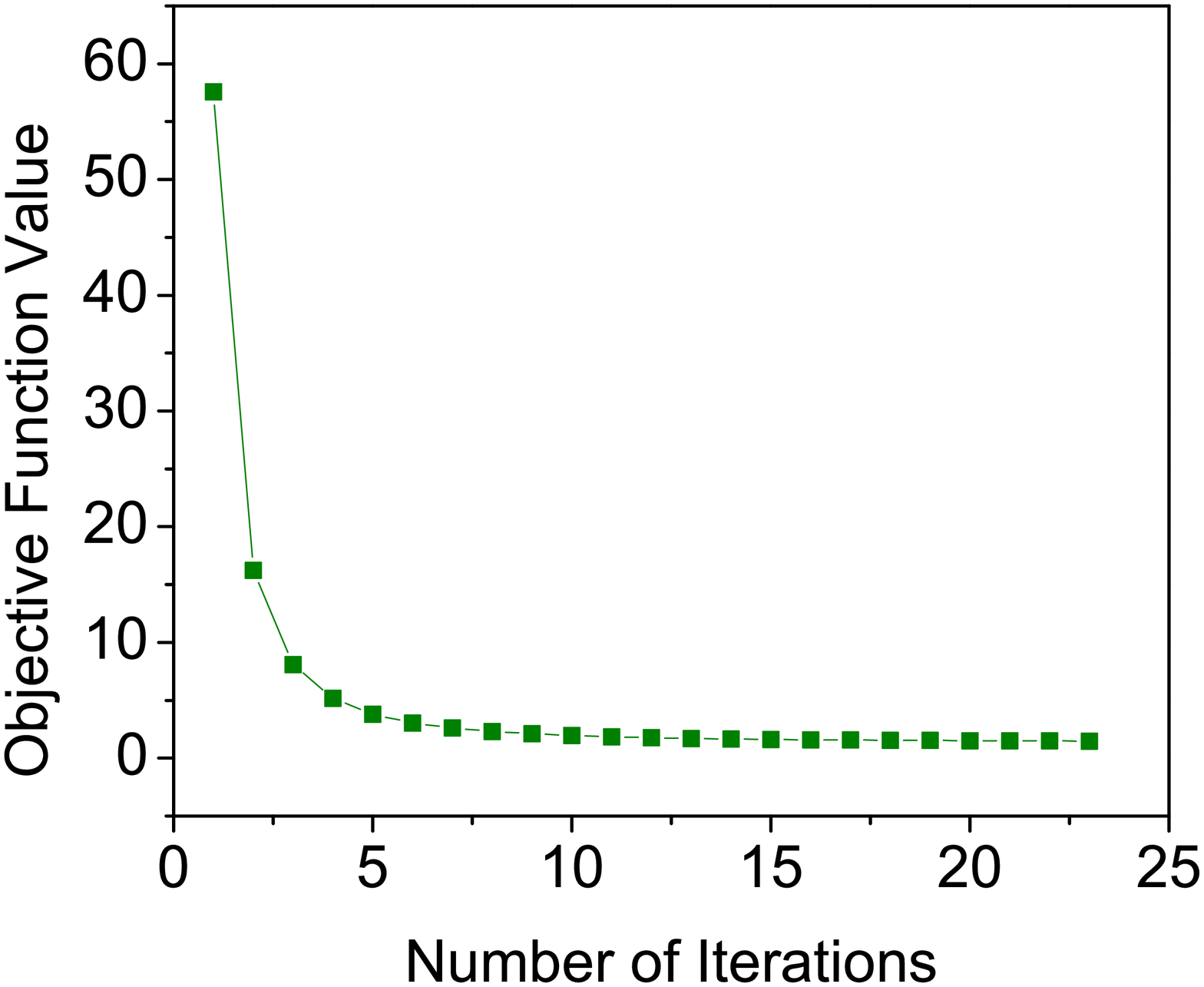}}
\subfigure[UMIST]{
\includegraphics[scale=0.0735]{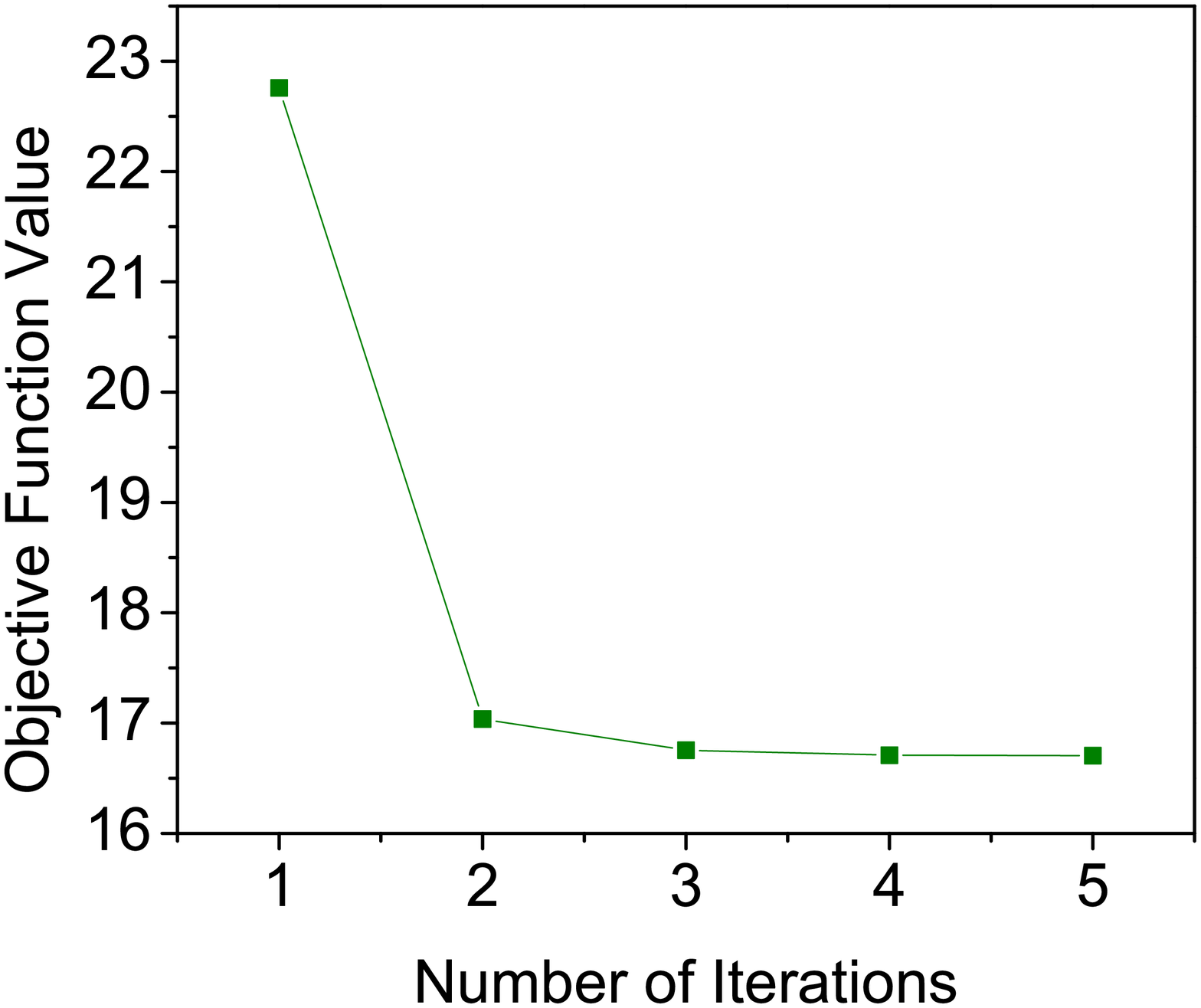}}
\subfigure[binalpha]{
\includegraphics[scale=0.0735]{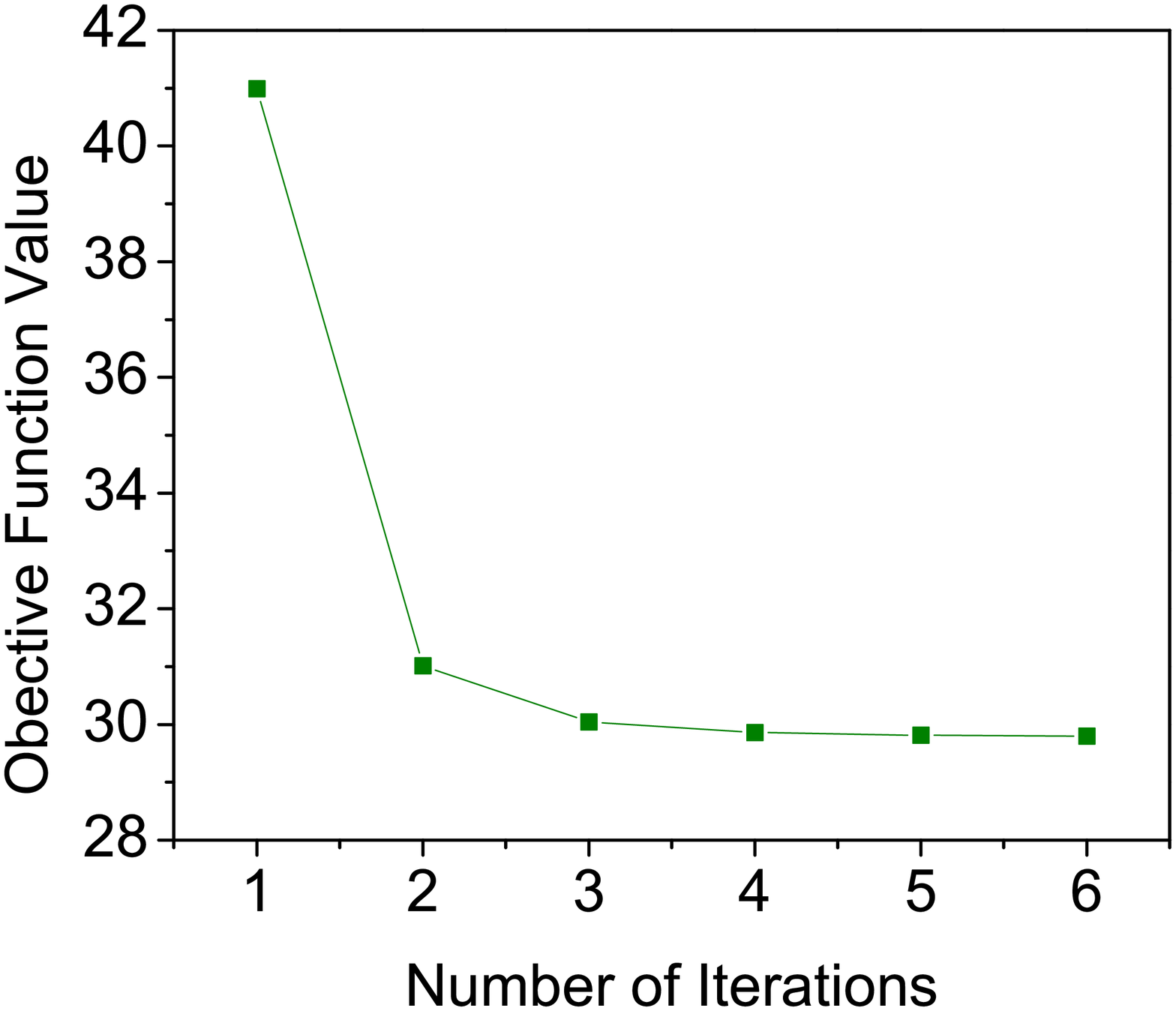}}
\subfigure[MSRA50]{
\includegraphics[scale=0.0735]{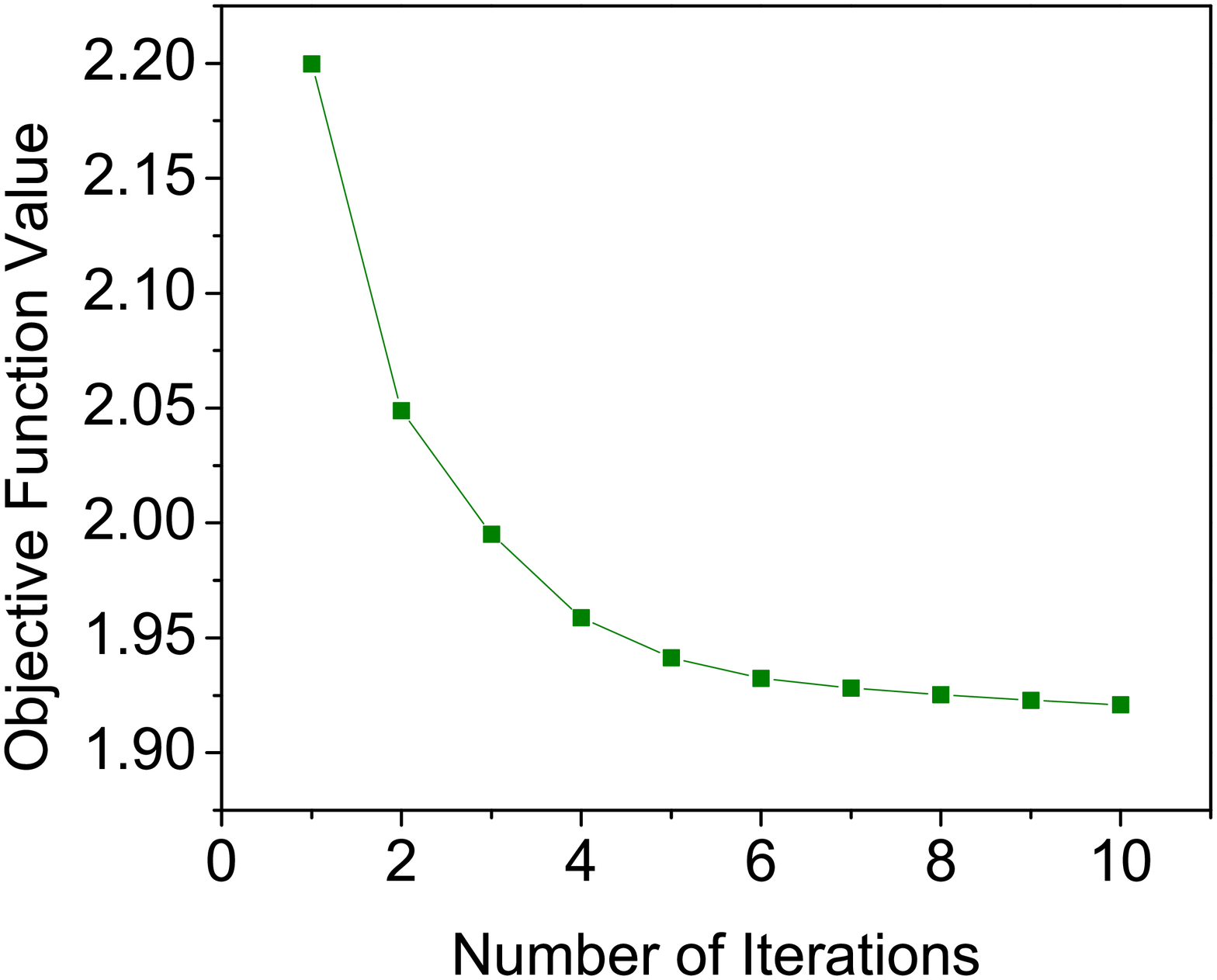}}
\subfigure[YaleB]{
\includegraphics[scale=0.0735]{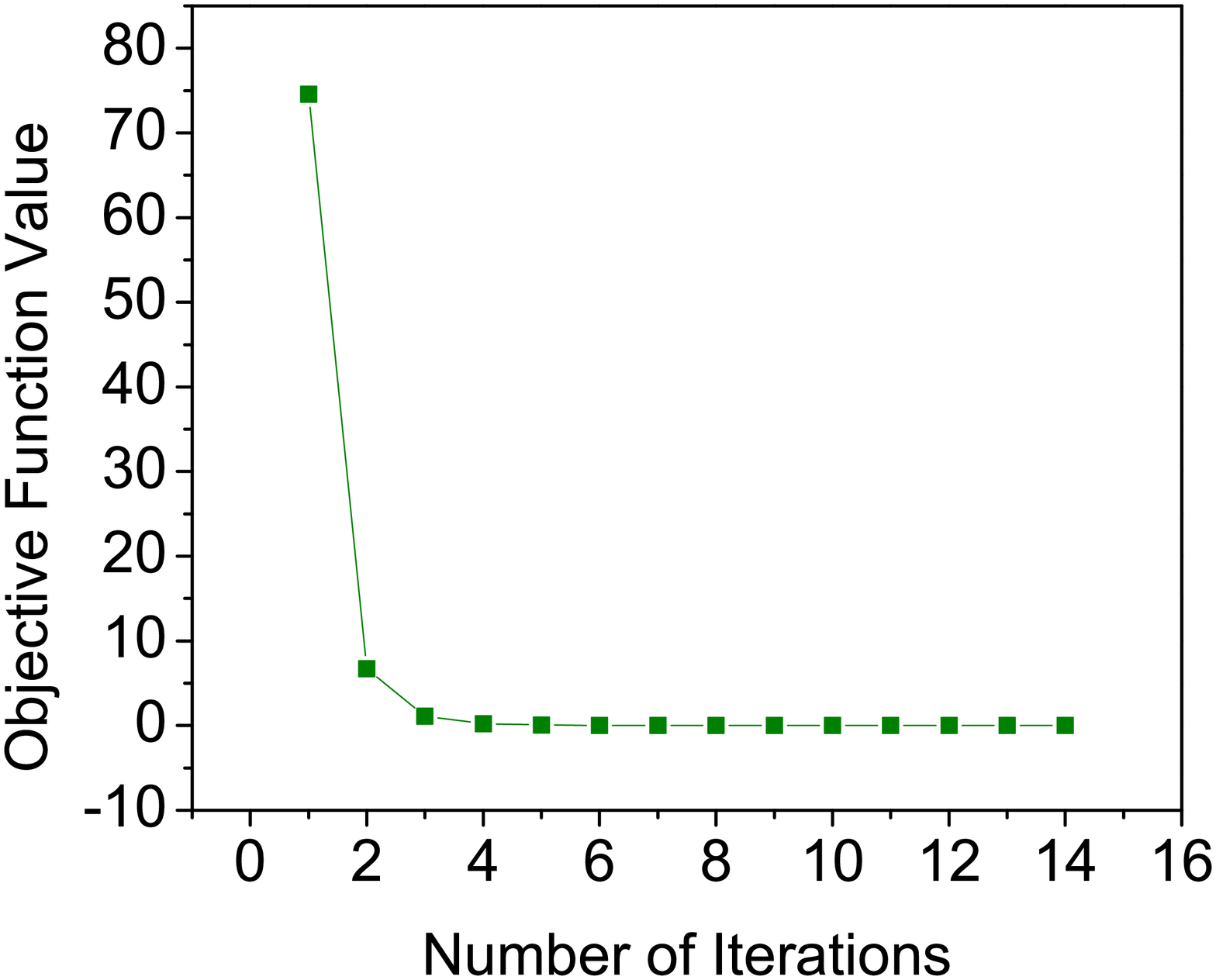}}
\subfigure[USPS]{
\includegraphics[scale=0.0735]{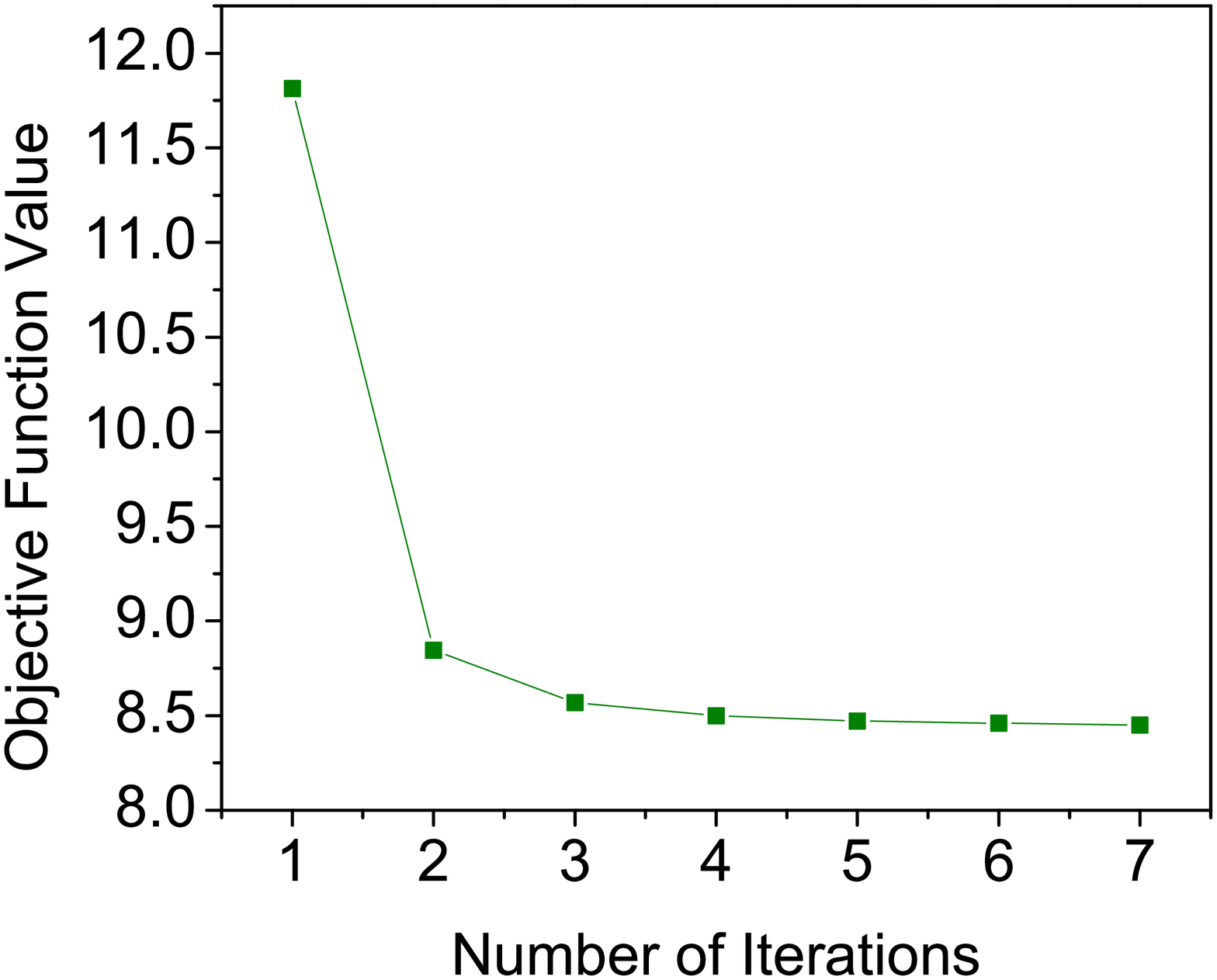}}
\caption{The convergence curves of our algorithm on different datasets. From the figures, we can observe that the objective function converges quickly, which demonstrates the efficiency of the proposed algorithm. }
\label{converge}
\end{figure*}

\section{Conclusion}
In this paper, we have proposed a novel convex formulation of spectral shrunk  clustering. The advantage of our method is three-fold. First, it is able to learn the manifold structure in the low-dimensional subspace rather than the original space. This feature contributes to more precise structural information for clustering based on the low-dimensional space. Second, our method is more capable of uncovering the manifold structure. Particularly, the shrunk pattern learned by the proposed algorithm does not have the orthogonal constraint, which makes it more flexible to fit the manifold structure. The learned manifold knowledge is particularly helpful for achieving better clustering result. Third, our algorithm is convex, which makes it easy to implement and very suitable for real-world applications. Extensive experiments on a variety of applications are given to show the effectiveness of the proposed algorithm. By comparing it to several state-of-the-art clustering approaches, we validate the advantage of our method.

{
\begin{small}
\bibliographystyle{aaai}
\bibliography{sc}
\end{small}

}

\end{document}